\documentclass[letterpaper]{article} 
\usepackage{aaai24}  
\usepackage{times}  
\usepackage{helvet}  
\usepackage{courier}  
\usepackage[hyphens]{url}  
\usepackage{graphicx} 
\urlstyle{rm} 
\usepackage{natbib}  
\usepackage{caption} 
\frenchspacing  
\setlength{\pdfpagewidth}{8.5in} 
\setlength{\pdfpageheight}{11in} 
%
\usepackage{algorithm}
\usepackage{algorithmic}

%
\usepackage{newfloat}
\usepackage{listings}
\DeclareCaptionStyle{ruled}{labelfont=normalfont,labelsep=colon,strut=off} 
\lstset{%
	basicstyle={\footnotesize\ttfamily},
	numbers=left,numberstyle=\footnotesize,xleftmargin=2em,
	aboveskip=0pt,belowskip=0pt,%
	showstringspaces=false,tabsize=2,breaklines=true}
\floatstyle{ruled}
\newfloat{listing}{tb}{lst}{}
\floatname{listing}{Listing}

\newcommand{\algo}{{\textsf{BOSouL}}}
\usepackage[english]{babel}
\usepackage{amsmath}
\usepackage{amsthm}
\usepackage{amssymb}
\theoremstyle{definition}
\newtheorem{definition}{Definition}
\theoremstyle{plain} 

\newtheorem*{theorem*}{Theorem}

%
\pdfinfo{
/TemplateVersion (2024.1)
}

\usepackage{xcolor}
\DeclareMathOperator*{\argmax}{argmax}

\setcounter{secnumdepth}{0} 

%


\title{Multiple-Source Localization from a Single-Snapshot Observation Using\\ Graph Bayesian Optimization}
\author{
    Zonghan Zhang,
    Zijian Zhang,
    Zhiqian Chen
}
\affiliations{


    Department of Computer Science and Engineering, Mississippi State University\\   
    zz239@msstate.edu, zz242@msstate.edu, zchen@cse.msstate.edu
%
}

\begin{document}

\maketitle

\begin{abstract}
Due to the significance of its various applications, source localization has garnered considerable attention as one of the most important means to confront diffusion hazards. Multi-source localization from a single-snapshot observation is especially relevant due to its prevalence. However, the inherent complexities of this problem, such as limited information, interactions among sources, and dependence on diffusion models, pose challenges to resolution. Current methods typically utilize heuristics and greedy selection, and they are usually bonded with one diffusion model. Consequently, their effectiveness is constrained.
To address these limitations, we propose a simulation-based method termed~\algo. Bayesian optimization (BO) is adopted to approximate the results for its sample efficiency. A surrogate function models uncertainty from the limited information. It takes sets of nodes as the input instead of individual nodes.~\algo~can incorporate any diffusion model in the data acquisition process through simulations. Empirical studies demonstrate that its performance is robust across graph structures and diffusion models. The code is available at https://github.com/XGraph-Team/BOSouL.
\end{abstract}

\section{Introduction}

In recent decades, the world has become more interconnected thanks to the emergence of various networks. Consequently, we have become more vulnerable to network diffusion risks such as the spread of rumors, influenza-like viruses, and smart grid failures~\cite{chowdhury2020joint,ozili2020spillover,amin2007preventing}. Source localization (SL), the reverse problem of information diffusion, has attracted significant attention from researchers as a necessary component of the confrontation against diffusion hazards~\cite{prakash2012spotting,zang2015locating,  wang2017multiple, zhu2017catch,dong2019multiple}. It holds importance across various application domains such as medicine, security, large interconnected networks, social networks, and more\cite{shelke2019source, li2007analysis}. Source localization can be leveraged to block negative influence (rumors and viruses), maintain infrastructure (power grid), determine accountability (propagators of rumors), and verify information reliability. For instance, negative news about an election candidate spread from the opponent's campaign office is less credible than one from a neutral third-party press.
\begin{figure}[t]
  \centering
  \includegraphics[width=0.98\linewidth]{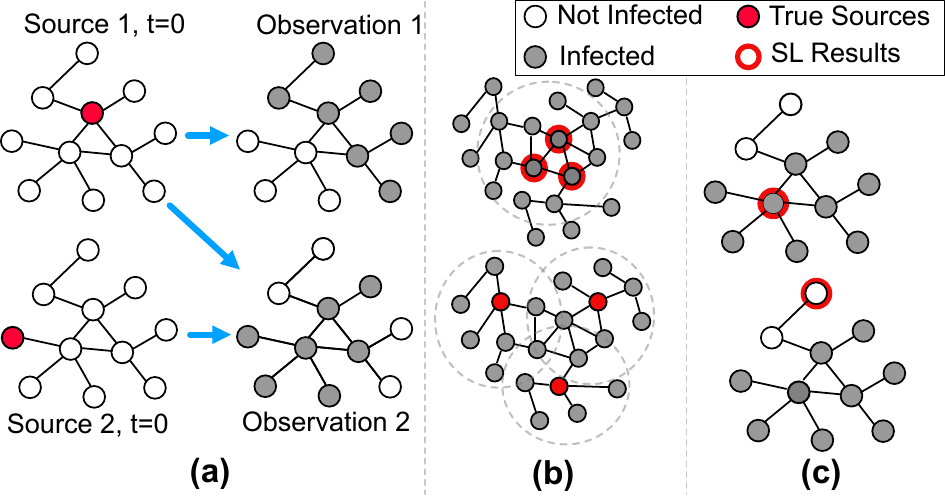}
    \vspace*{-2mm}
  \caption{The challenges faced by heuristic methods: (a) Sources and spreads have many-to-many relationships. (b) Greedy algorithms result in sub-optimal solutions (top). (c) Different diffusion models (SI at the top and SIR at the bottom) affect source localization results.
}
  \label{fig:1}
  \vspace*{-4mm}
\end{figure}

The SL problems can be divided into three classes based on network observation types: complete observation, monitor observation, and snapshot observation~\cite{shelke2019source}.
Among them, complete observation is rarely possible, given the massive scale of the real-world networks. Monitor observation, where part of the nodes are monitored for whether and when they are infected, is also not always available, especially for sudden outbreaks of unwanted propagation. Therefore, the most common scenario is single-snapshot observation, where only information about the network at a specific time is available (i.e., which nodes are infected at the snapshot). Early approach of single-snapshot source localization~\cite{prakash2012spotting} identified single sources of infectious diseases. Later, multi-source methods were proposed~\cite{wang2017multiple, zhu2017catch, dong2019multiple} to handle the wide existence of multiple diffusion sources in reality.

Although simulation-based methods seem to be legitimate to solve the SL problem, evaluating all node sets is infeasible due to the problem's combinatorial nature and the $\#P$-hardness of evaluating each set~\cite{kempe2003maximizing}.
Therefore, heuristic methods dominate the problem of multi-source localization from a single network snapshot~\cite{shah2010detecting,prakash2012spotting, zang2015locating, wang2017multiple, zhu2017catch, dong2019multiple, nie2019localizing}. 
However, this problem faces three major challenges that current heuristic methods cannot adequately address:
\textbf{(1) Many-to-many relationship between sources and spreads.}
Intuitively, one source set can result in various snapshots and vice versa. One snapshot may derive from multiple source sets with different probabilities. 
As shown in Figure~\ref{fig:1}(a), Source $1$ can lead to two totally different spreads, Observation $1$ and $2$. At the same time, Observation 2 can initiate from either Source $1$ or Source $2$.
It is difficult to properly model the many-to-many relationship using the limited information from a single snapshot due to the existence of this uncertainty~\cite{cai2018information}.
While heuristics efficiently approximate the localization, they only provide deterministic solutions without acknowledging the systematic uncertainty.
\textbf{(2) Complicated interactions among the sources.}
As a set of sources jointly spreads the influence through the network, considering their interrelationship is crucial for multi-source localization. Current heuristic methods approach the multi-source localization problem greedily~\cite{wang2017multiple,dong2019multiple,nie2019localizing}. 
The highest-scoring nodes are selected as sources after scoring each node based on corresponding heuristics. 
Nevertheless, ignoring the interactions among the sources has significant consequences. 
Figure~\ref{fig:1}(b) illustrates the imperfection of localizing sources greedily. Given an infected sub-graph, nodes selected by a greedy algorithm lie at the center of the sub-graph (top), while the true sources might locate at the centers of multiple hidden communities (bottom).
\textbf{(3) Heavy dependence on diffusion models.}
Diffusion problems involve three key entities: sources, diffusion model, and spread. Given any two, the third can be inferred. Therefore, source localization requires a diffusion model as one of the inputs. The same spread combined with different diffusion models will result in different source localization results. As demonstrated in Figure~\ref{fig:1}(c), the same observation leads to different results when combined with SI (top) and SIR (bottom). 
In the SI model, the sources must be within the infected sub-graph. However, in SIR, the sources are likely to have already recovered since they have a higher recovery possibility than other nodes. 
Nearly all heuristic methods are designed for one specific diffusion model~\cite{prakash2012spotting,zang2015locating}. Some claim to be model-free but make assumptions that implicitly constrain the diffusion model. For instance, LPSI~\cite{wang2017multiple} assumes that the sources are currently infected, indicating the diffusion model is like SI.

To address these issues, we propose a relatively efficient simulation-based method termed~\algo. Specifically, we evaluate the candidate source set's likelihood of being the true source set via simulations. Bayesian optimization is employed for its sample efficiency to reduce the number of simulations. A surrogate function mimics the relationship between the candidate set and its likelihood. Sampling through clustering is conducted to guarantee the sampled instances in each Bayesian optimization iteration are evenly distributed in the search space.
Our primary contributions include:
\begin{itemize}
\item \textbf{We propose an efficient simulation-based method utilizing Bayesian optimization}.~\algo~generates a Gaussian process that captures the uncertainty in the relationship between the set of sources and the observed snapshot. Furthermore, the Bayesian optimization paradigm significantly reduces the number of simulations, making the time cost of the algorithm acceptable. 

\item \textbf{The multiple sources are evaluated as a set instead of individually.} Thus, the interrelationship among the sources is included in~\algo~and its corresponding model. Comparing the greedy methods, the performance of~\algo~is more robust across different networks.

\item \textbf{Our method can be combined with any diffusion model.} As long as we have a diffusion model that can well capture the diffusion pattern, we can adopt that model in the simulations to build the relationship between a source set and an observation.

\item \textbf{We provide time complexity analysis. Extensive empirical experiments are conducted.} Real-world and synthetic datasets are employed to demonstrate~\algo's superior performance. It is also displayed that~\algo~scales as well as most of the baselines, and the runtime is reasonably acceptable.

\end{itemize}
\vspace{-2mm}

\section{Related Work}


\noindent\textbf{Source localization}, which aims to infer the origins of diffusion processes on networks given the diffused observation, has significant applications such as identifying rumor sources~\cite{shelke2019source} and finding 'patient zero' in a pandemic~\cite{scarpino2019predictability}. It has attracted growing research interest in recent years~\cite{shah2010detecting,prakash2012spotting, zang2015locating, wang2017multiple, zhu2017catch, dong2019multiple, nie2019localizing}. 
Diffusion studies have presented multiple diffusion models, such as epidemic models like susceptible-infected (SI), susceptible-infected-recovered (SIR), and susceptible-infected-susceptible (SIS)~\cite{brauer2019mathematical} and influence models like independent cascade (IC) and linear threshold (LT)~\cite{kempe2003maximizing}. However, early works focused on locating single sources under prescribed diffusion models. For instance, a few methods are designed specifically for SI model~\cite{prakash2012spotting, shah2010detecting, nie2019localizing}, and some others are designed for SIR~\cite{zhu2017catch}. Wang et al.~\cite{wang2017multiple} proposed a label propagation method named LPSI to detect multiple sources without knowing the underlying propagation model. Dong et al.~\cite{dong2019multiple} further enhanced LPSI by incorporating graph neural networks. However, since LPSI and its variants assume the sources are in the infected sub-graph, they implicitly suggest an SI-like diffusion model. Generally speaking, the current methods are bonded with certain diffusion models and lack generalizability. Also, most of the methods are simply greedily select sources based on single-source localization algorithms. The interrelationship among the sources is overlooked or intentionally ignored.

\noindent\textbf{Bayesian Optimization} is an approach for optimizing black-box functions that are expensive to evaluate. It constructs a probabilistic model of the objective function and uses this model to determine promising candidates to evaluate next~\cite{frazier2018tutorial}. Bayesian optimization was first proposed by Mockus et al. ~\cite{mockus1998application} and has since become a popular methodology for hyperparameter tuning and optimization of complex simulations and models~\cite{snoek2012practical}. The key idea is to leverage Bayesian probability theory to model uncertainty about the objective function. A prior distribution is placed over the space of functions, often a Gaussian process, which is updated as observations are made. An acquisition function then uses this model to determine the next evaluation point by balancing exploration and exploitation. Some common acquisition functions include expected improvement, knowledge gradient, and upper confidence bound~\cite{shahriari2015taking}. There has been much work extending Bayesian optimization to handle constraints~\cite{gelbart2014bayesian}, parallel evaluations~\cite{gonzalez2016batch}, and high dimensions~\cite{de2013bayesian}. Overall, Bayesian optimization provides an elegant and principled approach to sample-efficient optimization of black-box functions.
Bayesian optimization over a graph search space has emerged in the past decades. However, most of the works focus on node-level tasks and thus develop specific kernels for node smoothing~\cite{ng2018bayesian,oh2019combinatorial,walker2019graph,opolka2020graph,borovitskiy2021matern,opolka2022adaptive}. These works, while related, deal with a different task and the methods cannot be applied on our problem.
\vspace{-2mm}

\section{Method}
\label{method}

We propose a Bayesian Optimization for Source Localization (\algo) approach that combines the Bayesian optimization paradigm with simulations to enable more precise inference of source sets from single-snapshot observations.
\vspace{-4mm}

\subsection{Problem Formulation}

\begin{definition}[Single-snapshot Multi-Source Localization]
\textit{
Given 
\textbf{(1)} a size-$N$ graph $G(V,E)$ where $V$ and $E$ represent vertices and edges, respectively. 
\textbf{(2)} one single observation of propagation snapshot represented by a vector $o^* = \{0, 1\}^{N}$, where $1$ means the node is infected and $0$ means otherwise, 
\textbf{(3)} the underlying diffusion model $d$, and 
\textbf{(4)} the source $s= \{0, 1\}^{N}$ with its cardinality $|s|>1$. Note that any source node is not in the neighborhood of any other source node. The objective is to find the optimal $s$ that maximizes its conditional probability $\mathbf{P}$:
\vspace{-2mm}
\begin{equation}\small
s = \arg\max_{s} \mathbf{P}(s|o^*, G, d),
\label{eq:posterior}
\vspace{-2mm}
\end{equation}
where $\mathbf{P}$ is a conditional probability of $s$ given $o^*$, $G$ and $d$.}
\end{definition}

The difficulty presented by the aforementioned task is that the true source node set is unknown and cannot be retrieved during the source localization procedure. Therefore, it is impossible to compute the distance between the predicted and actual sources during the learning procedure. 
Extending Equation \ref{eq:posterior} with Bayes rule, we have:
\vspace{-2mm}
\begin{equation*}
\small
\mathbf{P}(s|o^*, G, d)=\frac{\mathbf{P}(o^*|s, G, d)\mathbf{P}(s)}{\mathbf{P}(o^*)}\sim \mathbf{P}(o^*|s, G, d),
    \vspace{-2mm}
\end{equation*}since no assumption is applied for $P(o)$ and $P(s)$, which will be set to uniform distribution as the prior probability. Then the task is changed to 
\vspace{-2mm}
\begin{equation*}
\small
    s = \arg\max_{s} \mathbf{P}(o^*|s, G, d).
    \vspace{-2mm}
\end{equation*}
The probability of a candidate source set $s$ is evaluated with the similarity between its simulated propagation spread $o$ and the observed snapshot $o^*$. So the estimated source is:
\vspace{-2mm}
\begin{equation}
\small
        \hat{s} = \arg\max_{s} \mathbf{P}(o^*|s, G, d)\sim\arg\max_{s} \textsc{SIM}(o, o^*; G, d).
        \label{eq:obj}
        \vspace{-2mm}
\end{equation}where $o$ is the simulation result of $s$ on $G$ with d. $\textsc{SIM}$ denotes a similarity metric between a pair of observations.
\vspace{-2mm}

\subsection{The Proposed Method:~\algo}
\begin{figure*}[htpb!]
  \centering
  \includegraphics[width=0.94\textwidth]{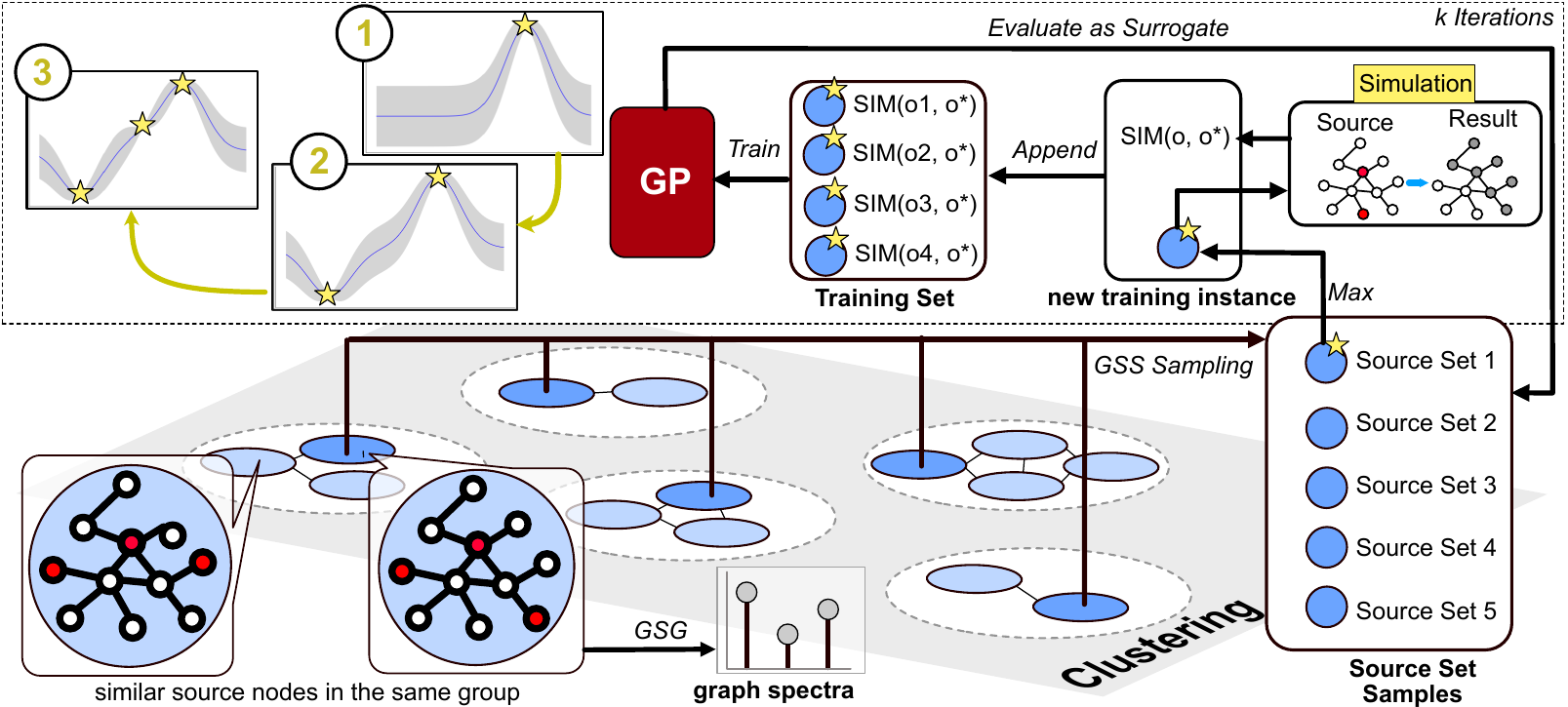}
  \vspace{-2mm}
  \caption{Illustration of the proposed $\algo$. Based on their graph Fourier representations, all possible candidates for source node sets are clustered in the bottom half of this figure. As training sets for GP, the model samples instances from each group utilizing GSS. In the upper half, one optimal instance is chosen by the surrogate GP (Max), and its likelihood of being a genuine source is determined through simulation. Sampling and training will be performed iteratively, and in each iteration, the GP model will be updated from a prior to a posterior (for example, from 1 to 2 on the upper left).}
  \label{fig:2}
  \vspace*{-4mm}
\end{figure*}

\subsubsection{Overview.}
As shown in Figure~\ref{fig:2}, we propose a Bayesian optimization-based learning framework for one-shot multi-source localization.
First, a kernel for the Gaussian Process (GP) is devised to measure the distance between source nodes, and then its validity is proven through theoretical analysis. Using the kernel, the output of the GP model is derived as a surrogate to predict the probability of a given source node set.
Next, we initialize GP with multiple actual simulations and select sites with expected improvement (EI) iteratively. 
Ultimately, the optimal solution is determined by traversing all candidates within the designated range.

\subsubsection{Gaussian Process Design.}
Consider a graph with $N$ nodes. Node sets are typically associated with a binary vector, labeled as $1$ if they are sources and $0$ otherwise. This vector is represented with $s= \{0, 1\}^{N}$. With $k$ sources, the total possible source configurations is $\binom{N}{k}$. Recognizing that not all nodes are equally significant in diffusion, like major cities in transport networks or key influencers in social networks, we focus on the top $a$ nodes by degree. This reduces potential source combinations to $\binom{a}{k}$ where $a \ll N$.

Meanwhile, $s$ is only a one-hot vector and lacks graph structure information. To illustrate, consider two 3-node sets: one original and the other formed by shifting each node by one hop based on the original one. 
Although the final observations are anticipated to be similar for these two sets, their similarity with the binary representations is quite low (0 in this case). 
This binary representation inadequately characterizes the similarity between two sets of nodes and violates the smoothness assumption imposed by the Gaussian process.
Previous work for graph kernels prioritizes structural comparisons, often ignoring attributes over the graphs \cite{vishwanathan2010graph,kriege2020survey,nikolentzos2021graph,siglidis2020grakel}. 
To overcome this constraint, we propose the introduction of a novel kernel that effectively combines structure information with theoretical validity.
First, the source vector \( s \)) is transformed into its Fourier counterpart \( \Tilde{s} \) such that:
\vspace{-2mm}
\begin{equation}\small
    \Tilde{s} = U^{\top}s, \quad \Tilde{s}(i) = \mathop{\Sigma}\limits^n_{i=1} s_i U^{\top}(i),
    \label{eq:fourier_transform}
    \vspace{-2mm}
\end{equation}where $U^{\top}$ is the inverse eigenvectors of the graph Laplacian and serves a graph Fourier transformer. 
Combining the graph Fourier transform and RBF kernel, we have a new kernel termed as graph spectral Gaussian (GSG) kernel:
\vspace{-2mm}
\begin{equation}\small
    \mathcal{K}(x,x';\mathit{l}) = exp(-\frac{||U^\top x-U^\top x'||^2}{2\mathit{l}^2}),
    \label{eq:kernel}
    \vspace{-2mm}
\end{equation} where $\mathit{l}$ is a hyperparameter corresponding to the length-scale of the RBF kernel. 
Mercer kernels are essential for Gaussian Processes (GPs) as they ensure valid covariance matrices and enable implicit high-dimensional data mapping. Additionally, they offer computational benefits through the ``kernel trick" in expansive spaces. Therefore, we analyze if the proposed kernel is a valid Mercer kernel.
\begin{theorem*}
    GSG is a valid Mercer Kernel for GP.
    \vspace{-2mm}
\end{theorem*}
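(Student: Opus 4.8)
The plan is to verify the two properties that make a kernel a valid Mercer kernel for a GP: symmetry and positive semi-definiteness (continuity being automatic here, and the candidate index set being finite, so Mercer's theorem applies once PSD is established). The central observation driving the whole argument is that the GSG kernel in Equation~\ref{eq:kernel} is simply the standard RBF (Gaussian) kernel \emph{precomposed} with the fixed linear map $\phi(x) = U^\top x$. Writing $k_{\mathrm{RBF}}(u,v) = \exp(-\|u-v\|^2/(2l^2))$, the definition reads $\mathcal{K}(x,x';l) = k_{\mathrm{RBF}}(\phi(x),\phi(x'))$. Symmetry is immediate since $\|\phi(x)-\phi(x')\| = \|\phi(x')-\phi(x)\|$, so all the real work lies in positive semi-definiteness.

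First I would establish the classical fact that $k_{\mathrm{RBF}}$ is positive definite on $\mathbb{R}^m$. The cleanest self-contained route is Bochner's theorem: $k_{\mathrm{RBF}}(u,v)$ depends only on $u-v$, and the Gaussian $z \mapsto \exp(-\|z\|^2/(2l^2))$ is, up to a positive normalizing constant, the Fourier transform of a nondegenerate Gaussian measure on $\mathbb{R}^m$. Since that measure is finite and nonnegative, Bochner's theorem yields positive definiteness. An equally valid alternative is to expand $\exp(\langle u,v\rangle/l^2) = \sum_{n\geq 0} \langle u,v\rangle^n / (n!\, l^{2n})$, observe that each $\langle u,v\rangle^n$ is PSD as an $n$-fold Schur product of the linear kernel $\langle u,v\rangle$, and note that nonnegative combinations together with pointwise multiplication by the rank-one factor $\exp(-\|u\|^2/(2l^2))\exp(-\|v\|^2/(2l^2))$ preserve the PSD property. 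Either argument certifies $k_{\mathrm{RBF}}$ as a Mercer kernel.

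Second, I would invoke the standard closure property that precomposing a PSD kernel with any deterministic map preserves positive semi-definiteness. Concretely, for any finite set $x_1,\dots,x_n$ and reals $c_1,\dots,c_n$,
\begin{equation*}
\sum_{i,j} c_i c_j\, \mathcal{K}(x_i,x_j;l) = \sum_{i,j} c_i c_j\, k_{\mathrm{RBF}}(\phi(x_i),\phi(x_j)) \geq 0,
\end{equation*}
because the right-hand side is exactly the quadratic form of the (PSD) RBF Gram matrix evaluated at the transformed points $\phi(x_1),\dots,\phi(x_n)$. Hence every Gram matrix of $\mathcal{K}$ is PSD, and $\mathcal{K}$ is a valid Mercer kernel.

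I expect the only genuinely substantive step to be the positive-definiteness of the underlying RBF kernel; the composition step is routine bookkeeping and the symmetry is trivial. One remark worth recording is that, because the graph Laplacian is symmetric, its eigenvectors may be chosen orthonormal, so $U$ is orthogonal and $\phi$ is in fact an isometry. The argument above does not rely on this, but it confirms that the spectral reparametrization never destroys the Mercer property regardless of how $U$ is constructed, so the theorem holds in full generality.
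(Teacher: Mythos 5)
Your proof is correct, but it takes a genuinely different route from the paper's. The paper proves the theorem by exploiting the orthogonality of the eigenvector matrix of the (symmetric) graph Laplacian: expanding $\|U^\top x - U^\top x'\|^2$ and using $UU^\top = I$, it shows that $(U^\top x)^\top(U^\top x') = x^\top x'$, hence $\mathcal{K}(x,x';l) = \exp(-\|x-x'\|^2/(2l^2))$ \emph{identically} --- GSG is pointwise equal to the standard RBF kernel on the raw vectors, which is a known Mercer kernel. You instead use the pullback closure property: the Gram matrix of $\mathcal{K}$ at $x_1,\dots,x_n$ is exactly the RBF Gram matrix at the transformed points $U^\top x_1,\dots,U^\top x_n$, so positive semi-definiteness of RBF (which you establish from scratch via Bochner's theorem or the Schur-product/power-series argument, though citing it as classical would suffice) transfers immediately. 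Each approach buys something: the paper's cancellation is shorter, but it hinges entirely on $U$ being orthogonal, and as a by-product it reveals that with an orthonormal eigenbasis the GSG kernel returns \emph{the same values} as plain RBF on the binary source vectors. Your argument is strictly more general --- it certifies the Mercer property for any deterministic transform $\phi$, including a truncated or non-orthogonal spectral embedding, where the paper's cancellation step would fail --- and your closing remark correctly identifies orthogonality as a dispensable rather than essential ingredient.
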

\begin{proof}\small
The kernel in Equation \ref{eq:kernel} can be transformed as follows:
\begin{align*}
    &\mathcal{K}(x,x';\mathit{l}) \\
    &=exp(-\frac{||U^\top x-U^\top x'||^2}{2\mathit{l}^2})\\
    &=exp(-\frac{[(U^\top x)^\top U^\top x+(U^\top x')^\top U^\top x' - 2(U^\top x)^\top U^\top x']}{2\mathit{l}^2})\\
    &=exp(-\frac{[x^\top x+x'^\top x' - 2 x^\top x']}{2\mathit{l}^2})=exp(-\frac{||x-x'||^2}{2\mathit{l}^2}).
\end{align*}
Hence, $\mathcal{K}(x,x';\mathit{l})$ can be considered equivalent to the RBF kernel, which is widely recognized as a valid Mercer kernel. 
\end{proof}
\vspace{-2mm}
Next, we set up a Gaussian process (GP) with GSG kernel to realize Equation \ref{eq:obj}. 
This GP aims to estimate the existence probability of the provided source by evaluating the similarity between its corresponding and real observation.
\vspace{-2mm}
\begin{equation}\small
    GP: s \rightarrow \tau(o, o^*),
    \vspace{-2mm}
\end{equation}where $o$ is one observation by simulation from $s$.
\vspace{-2mm}

\subsubsection{Data acquisition.}

The surrogate GP needs to be initialized and trained iteratively, which both resort to sampling techniques. Initialization requires sampling multiple data points, while each iteration selects another data point from a new set of samples by maximizing an acquisition function, which uses the GP posterior to balance exploration and exploitation. 
Due to the discrete property of the graph data, traditional sampling methods, such as the Sobol sequence~\cite{sobol1967distribution}, do not fit the source localization problem. As a replacement, we propose a graph stratified sampling (GSS), which clusters the candidate and sample uniformly from each group. Specifically, GSS performs clustering over graph Fourier signals of candidate sources (Equation \ref{eq:fourier_transform}), and samples equal-size candidates from each cluster.

\begin{theorem*}
    GSS has lower variance than random sampling.
    \vspace{-2mm}
\end{theorem*}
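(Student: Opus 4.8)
The plan is to read this as the classical variance-reduction property of stratified sampling and to prove it through the law of total variance (the ANOVA decomposition of the population variance). First I would fix the estimand: let $y(\cdot)$ denote the quantity whose expectation we estimate over the $M=\binom{a}{k}$ candidate source sets (concretely, the simulated similarity value), with population mean $\mu=\frac{1}{M}\sum_i y_i$. Random sampling draws $n$ candidates uniformly and forms $\hat{\mu}_{\mathrm{rand}}=\frac{1}{n}\sum_j y_j$, whereas GSS partitions the candidates into $H$ strata of weights $W_h$ by clustering their graph Fourier signals (Equation~\ref{eq:fourier_transform}) and samples within each stratum, forming $\hat{\mu}_{\mathrm{GSS}}=\sum_h W_h \bar{y}_h$.

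Next I would write down the two variances side by side. For random sampling, $\mathrm{Var}(\hat{\mu}_{\mathrm{rand}})=\sigma^2/n$, where $\sigma^2$ is the total population variance of $y$. For stratified sampling under proportional allocation $n_h=nW_h$, the stratum sample means are independent, giving $\mathrm{Var}(\hat{\mu}_{\mathrm{GSS}})=\frac{1}{n}\sum_h W_h \sigma_h^2$, where $\sigma_h^2$ is the within-stratum variance of $y$ in cluster $h$.

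The crux is the decomposition $\sigma^2=\sum_h W_h \sigma_h^2+\sum_h W_h(\mu_h-\mu)^2$, i.e.\ total variance equals the within-stratum variance plus the non-negative between-stratum variance, where $\mu_h$ is the stratum mean. Substituting this into the random-sampling variance and subtracting the stratified variance yields $\mathrm{Var}(\hat{\mu}_{\mathrm{rand}})-\mathrm{Var}(\hat{\mu}_{\mathrm{GSS}})=\frac{1}{n}\sum_h W_h(\mu_h-\mu)^2\ge 0$, which is exactly the claim, with equality only when every stratum mean equals the global mean.

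The main obstacle is reconciling this idealized proportional-allocation argument with the actual GSS scheme, which draws equal-size batches from each cluster. I expect to handle this by either assuming the clustering returns (approximately) balanced strata, so equal allocation coincides with proportional allocation, or by carrying the weights $W_h$ explicitly and observing that the between-stratum term remains the sole source of the gap. A secondary subtlety I would make explicit is that the inequality is weak in general and becomes strict precisely when the graph-Fourier clustering induces genuine between-cluster variation in $y$; hence the theorem is best read as asserting that informative stratification over the spectral representation cannot increase the estimator variance and typically reduces it.
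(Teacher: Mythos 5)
Your proof is correct, but it follows a genuinely different route from the paper's. The paper fixes equal-sized clusters with equal per-cluster sample sizes, writes $\mathrm{Var}(\bar{Y}_{\mathrm{gss}})=\sum_{i}\kappa^{-2}\sigma_i^2/\tilde{m}$, and then invokes the \emph{assumption} that within-group similarity forces $\sigma_i^2\le\sigma^2$ for every single stratum $i$; the inequality then follows by term-by-term comparison. Your argument instead derives the result from the ANOVA identity $\sigma^2=\sum_h W_h\sigma_h^2+\sum_h W_h(\mu_h-\mu)^2$, so the gap between the two variances is exactly the between-stratum dispersion $\frac{1}{n}\sum_h W_h(\mu_h-\mu)^2\ge 0$. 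This is a strictly stronger conclusion from weaker hypotheses: the paper's per-stratum condition $\sigma_i^2\le\sigma^2$ is not automatic for an arbitrary clustering (a stratum containing points from both tails of the distribution can have variance exceeding the population's), so the paper's theorem is really conditional on the clustering quality, whereas your decomposition makes the weak inequality unconditional under proportional allocation and pinpoints exactly when it is strict (some stratum mean differs from the global mean). Your handling of the allocation subtlety is also sound: the paper's setting of equal-sized strata with equal per-stratum samples is precisely proportional allocation, so your formula specializes to theirs. The one thing the paper's framing buys that yours does not emphasize is the intuition the authors want to convey --- that spectral clustering produces low within-group variance --- but in your framework this reappears, more honestly, as the observation that informative clustering makes the between-stratum term (and hence the variance reduction) large.
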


\begin{proof}
Simple random sampling randomly draws \(  m \) samples from the entire population. The variance of its mean estimator is:
\vspace{-2mm}
{
\begin{align*}
\text{Var}(\bar{Y}_{\text{rs}}) &= \text{Var}\left(\frac{\sum_{i=1}^{m}Y_i}{m}\right)
=\frac{1}{m^2}\text{Var}\left(\sum_{i=1}^{m}Y_i\right) =\frac{\sigma^2}{m},
\end{align*}}where \( \sigma^2 = \text{Var} (Y_i)\) is the population variance. 
To set up GSS, we divide all candidates into \( \kappa \) non-overlapping equal-sized groups based on similarity. 
\( N \) is the population, and \( N_i \) is the population in \( i \)-th group.
From the \( i^{th} \) group, \( m_i \) samples are drawn, with a total of \( m = m_1 + m_2 + \dots + m_{\kappa} \) samples. The variance of this GSS mean estimator is given by:
$
\text{Var}(\bar{Y}_{\text{gss}})=\text{Var}\left(\sum_{i=i}^{\kappa}\bar{Y}_{i}\right) = \sum_{i=1}^{\kappa} \left( \frac{N_i}{N} \right)^2 \frac{\sigma_i^2}{m_i},
$
where \( \sigma_i \) is the sample mean of the \( i^{th} \) group.
To demonstrate the variance reduction of GSS compared to simple random sampling, we compare \( \text{Var}(\bar{Y}_{\text{gss}}) \) and \( \text{Var}(\bar{Y}_{\text{rs}}) \). Note that the within-group similarity exists, so the variances within each group are smaller than the overall population variance, i.e., \( \forall i, \sigma^2\geq\sigma^2_{i} \). In addition, the sample size of each group is the same (i.e., \( m_1=m_2=\ldots=m_c=\Tilde{m}, \text{and } \kappa\cdot \Tilde{m}=m \)), the size of each group is the same (\( \frac{N}{N_i}=\kappa \)). So:
\begin{equation*}
\text{Var}(\bar{Y}_{\text{gss}})= \sum_{i=1}^{\kappa} \left( \frac{1}{\kappa} \right)^2 \frac{\sigma_i^2}{\Tilde{m}} \leq \sum_{i=1}^{\kappa} \left( \frac{1}{\kappa} \right)^2 \frac{\sigma^2}{\Tilde{m}} = \text{Var}(\bar{Y}_{\text{rs}}).
\end{equation*}
\end{proof}
\vspace{-4mm}
GSS clusters similar items within each group, thereby reducing within-group variance and, consequently, the estimator's overall variance. This would aid Bayesian Optimization in minimizing overall variance and drawing precise conclusions about actual sources. 
Note that the expected sample mean by GSS is identical to the sample mean by random sampling, which is the population mean. Consequently, a decrease in variance reduces inference errors.

Expected improvement (EI) is used to estimate the potential improvement of samples over the current best observation. 
Suppose the model clusters all candidates into $b$ groups $C=\{c_1,c_2,\ldots,c_b\}$, $\gamma$ sets are sampled from each group such that $\{s_{ij}\}_{j=1}^{\gamma}\sim c_i$.
We optimize EI over the sample set $[s_{11},s_{12},...,s_{b\gamma}]$ such that:
\begin{equation*}\small
    \Tilde{s}^* = \argmax_{\Tilde{s}_{ij} \in [\Tilde{s}_{11},\Tilde{s}_{12},...,\Tilde{s}_{b\gamma}]} \textsc{EI}(\Tilde{s}_{ij}) = \argmax_{\Tilde{s}_{ij}} \mathbb{E}[\delta(s_{ij}, s+)\cdot I(s_{ij})],
\end{equation*} where $s+$ is the best set so far, $s_{ij}$ is the node set that corresponds to the graph Fourier transform signal $\Tilde{s}$, 
and $\delta(s, s+) = f(s;o^*) - f(s+;o^*)$.
$I(s_{ij})$ is an indicator function that equals to $1$ when $f(s_{ij};o^*)>f(s+;o^*)$ and $0$ when otherwise.
Although the search space in our problem is finite, enumerating all node sets in each iteration violates our principle of efficiency. Thus, we strategically sample a few sets with GSS and use EI to pick the maximizer. 

For the initial node sets and the one node set in each iteration, we need to query the true value of the objective function $\tau(o, o^*)$. The evaluation is achieved by simulations based on the given diffusion models, such as SI, SIS, or SIR. The proposed method~\algo~does not require the diffusion step as one of the inputs. Instead, our algorithm finds a simulation step $t$ that maximizes the similarity between the diffusion spread from $s$ and the given observation $o^*$ in each simulation round. The similarity $\textsc{SIM}$, an integer evaluated by the Hamming distance between the two vectors, keeps updating as the time step grows. We expect the similarity to grow first as diffusion time step $t$ grows. It peaks after a few steps and starts to decrease. On the one hand, with diffusion models like SI, the similarity decreases as the simulated spread suppresses the observed snapshot. On the other hand, with diffusion models like SIR and SIS, the similarity decreases when the diffusion waves of the simulation stagger the infected sub-graph of the observation. The simulation stops when the similarity shows a monotonically decreasing pattern and $\textsc{SIM}$ is set to its historically high. After multiple rounds of simulations, we have:
\vspace{-1mm}
\begin{equation}\small
    \tau(o, o^*) = \mathbb{E}[\max_t \textsc{SIM}(o_t, o^*;G,d,t,s)].
    \label{eq:tau}
    \vspace{-2mm}
\end{equation}

\subsection{Algorithm}

\algo~is demonstrated in Algorithm~\ref{alg:1}. Initiated with graph $G$ with $n$ source nodes, one-shot observation $o^*$, given diffusion model $d$, budget $k$, and sample size $\gamma$, it aims to produce an $n$-sized node set $s$ that approximates the true diffusion source. The algorithm selects the top $a$ nodes as the candidate pool based on degree centrality. A graph Fourier transform is applied on all $n$-sized subsets of $s^{pool}$ (lines \ref{alg:GFT_start}-\ref{alg:GFT_end}). These transformed sets are clustered into $c$ groups for later stratified sampling (line \ref{alg:GSS}). One graph Fourier transform signal is randomly sampled from each cluster and evaluated by the peak similarity between the diffusion spread and the observation achieved during the simulations as discussed in the data acquisition section (line \ref{alg:sim}). The $c$ pairs of Fourier representation of sources and similarities are used to train the GP model as an initialization (line \ref{alg:init_start}-\ref{alg:init_end}). In each following iteration, a new group of data points is sampled by GSS, and one of them is picked by the EI acquisition function. After evaluation, the GP model is updated with the new signal-similarity pair, and the process repeats until convergence or the iteration budget is used up (line \ref{alg:iter_start}-\ref{alg:iter_end}). After that, all candidate sets are evaluated with the model, and the maximizer is the estimated source set $\hat{s}$ (line \ref{alg:eval}).
\vspace{-1mm}
\begin{algorithm}[t]\small
\caption{\textbf{\algo}}\label{alg:1}
 \textbf{Input:} Graph $G$, source number $n$, observed snapshot $o^*$, diffusion model $d$, budget $k$, sample size $\gamma$\\
 \textbf{Output:} A $n$-sized node set $\hat{s}$
\begin{algorithmic}[1]
\STATE  set $\Tilde{S}\leftarrow \emptyset$, set $\Phi\leftarrow \emptyset$,  simulation step $t\leftarrow 0$ 
\STATE $s^{pool}$ $\leftarrow$ top $a$ nodes by degree centrality            \label{alg:preselect}
\STATE $S\leftarrow$ all $n$-size node sets $\subset$ $s^{pool}$                   \label{alg:GFT_start}
\FOR{$s\in S$ }                                                           \label{alg:loop_start}
    \STATE $\Tilde{s}\leftarrow$  $U^{\top}s$ as in Eq.~\ref{eq:fourier_transform}
    \STATE $\Tilde{S} \leftarrow  \Tilde{S} + \Tilde{s}$ 
\ENDFOR                                                                    \label{alg:GFT_end}
\STATE cluster $\Tilde{S}$ into $b$ groups: $C=\{c_1,c_2,\ldots,c_b\}$     \label{alg:GSS}
\STATE sample 1 sets from each set group, $\{\Tilde{s}_{i}\}_{i=1}^{b}\sim c_i$
\label{alg:init_start}
\FOR{$\Tilde{s}_{i} \in [\Tilde{s}_{1},\Tilde{s}_{2},...,\Tilde{s}_{b}] $}
    \STATE $s_i \leftarrow S[loc(\Tilde{s}_i)]$ where $loc(\cdot)$ is the index of $\cdot$ in $\Tilde{S}$ 
    \STATE $o_{t}\leftarrow$ simulate $d$ on $G$ with $s_{i}$ and increasing $t$  \label{alg:sim}
    \STATE $\tau_{i} \leftarrow \mathbb{E}[\max_t$ \textsc{SIM}($o^*$, $o_{t}$)] as in Eq.~\ref{eq:tau}
    \STATE $\Phi \leftarrow \Phi + (\Tilde{s}_{i}, \tau_{i})$ 
\ENDFOR
\STATE train GP (as surrogate) with $\Phi$: $\Tilde{s}\xrightarrow{GP}\tau$        \label{alg:init_end}
\WHILE{$z\neq$0 ($z=k-b$)}                                                 \label{alg:iter_start}
    \STATE sample $\gamma$ sets from each set group, $\{\Tilde{s}_{ij}\}_{j=1}^{\gamma}\sim c_b$
    \STATE $\Tilde{s}^* \leftarrow \arg\max_{\Tilde{s}_{ij}} \text{EI}(\Tilde{s}_{ij})$, and $\Tilde{s}_{ij} \in [\Tilde{s}_{11},\Tilde{s}_{12},...,\Tilde{s}_{b\gamma}] $
    \STATE $s^* \leftarrow S[loc(\Tilde{s}^*)]$
    \STATE $o_{t}\leftarrow$ simulate $d$ on $G$ with $s$ and increasing $t$
    \STATE $\tau^* \leftarrow \mathbb{E}[\max_t$ \textsc{SIM}($o^*$, $o_{t}$)]
    \STATE $\Phi \leftarrow \Phi + (\Tilde{s}^*, \tau^*)$ and re-train GP with $\Phi$
    \STATE $z \leftarrow z - 1$
\ENDWHILE                                                                  \label{alg:iter_end}
\STATE Evaluate $S$ with GP: ${\hat{s}}=\arg\max_{s \in S}\text{GP}(s)$   \label{alg:eval}
\end{algorithmic}
\end{algorithm}

\subsection{Time complexity}

\begin{table*}[htb]
\small
\centering
\resizebox{0.9\textwidth}{!}{%
\begin{tabular}{lllllll}
\hline
Methods         & Jordan Centrality & LPSI     & NetSleuth    & LISN          & BOSI\_prep & BOSI\_opti \\
\hline
Time Complexity & $\mathcal{O}(N^3)$          & $\mathcal{O}(N^3)$ & $\mathcal{O}(|V_I|+|E_I|+|E|)$ &   $\mathcal{O}(N^3)$ & $\mathcal{O}(N^3)$         & $\mathcal{O}(k^4)$          \\
\hline
\end{tabular}%
}
\vspace{-2mm}
\caption{Time complexities of~\algo~and other popular alternatives.}
\label{tab:complexity}
\vspace*{-6mm}
\end{table*}
We analyze the time complexity of~\algo~based on Algorithm~\ref{alg:1} and compare it with popular multi-source localization methods. Selecting $a$ nodes with the highest degree centralities (line~\ref{alg:preselect}) is $\mathcal{O}(|V|+|E|) = \mathcal{O}(N^2)$ using BFS traversal. This complexity can be further reduced to $\mathcal{O}(N)$ for sparse graphs. Calculating the graph Fourier transform operator is $\mathcal{O}(N^3)$~\cite{merris1994laplacian}. Generating all $n$-sized node sets from $s$ (line~\ref{alg:GFT_start}) requires $\mathcal{O}(a^n)$. Looping through all combinations has a time complexity of $\mathcal{O}(a^n)$, and the operations inside the loop are multiplications between $1*N$ vectors and $N*N$ matrices, which are $\mathcal{O}(N^2)$. Thus, the time complexity for the whole block (line~\ref{alg:loop_start}-\ref{alg:GFT_end}) is $\mathcal{O}(a^n N^2)$. Clustering the graph Fourier signals is $\mathcal{O}(a^n)$. The operations above are the preparations for the GP training and have a combined time complexity of
\vspace{-2mm}
\begin{equation}\small
    \mathcal{O}(N^2 + N^3 + a^n + a^n N^2 + a^n) = \mathcal{O}(N^3),
\vspace{-2mm}
\end{equation}
when $a \ll N$ and $n$ is a very small integer.

Breaking down the GP training process and the final prediction (line \ref{alg:init_start} - \ref{alg:eval}), we get all the operations carried out. There are $(b+b\gamma(k-b))$ samplings from the cluster, $k$ simulations to evaluate the true similarity between the simulated spread and the true observation, $(k-b+1)$ rounds of GP model training, and $1$ evaluation for each candidate node sets using the trained GP. Assuming each simulation takes a long but constant time, the time complexities of sampling, simulation, GP training, and evaluation are $\mathcal{O}(1), \mathcal{O}(1), \mathcal{O}(|\Phi|^3), \text{and } \mathcal{O}(1)$~\cite{rasmussen2006gaussian}, respectively. Thus, the time complexity for the whole training and predicting period is
\vspace{-2mm}
\begin{equation}\small
    \mathcal{O}(b+bk\gamma-b^2\gamma+k+(k-b+1)|\Phi|^3+a^n) = \mathcal{O}(k^4)
\vspace{-2mm}
\end{equation} since $|\Phi| \leq k$, $b$ and $g$ are constants, and $a^n$ is ignorable compared to the problem size $N$.
The overall time complexity of~\algo~is $\mathcal{O}(N^3+k^4)$ where $N$ is the graph size and $k$ is the evaluation budget. This complexity is compared with other methods in Table~\ref{tab:complexity}. We can see that NetSleuth, proposed as an efficient algorithm, still has a better scalability. But~\algo, as a simulation-based method, has the same time complexity as the other heuristics when the budget is relatively small. Also, note that the only operation bonded with the $\mathcal{O}(N^3)$, namely the eigendecomposition of the Laplacian matrix, runs only once in~\algo; we expect its running time to grow slower than the other baselines.
\vspace{-2mm}

\section{Experiment}
\label{exp}

The experiments are carried out with 32 AMD EPYC 7302P 16-Core processors and 32GB RAM. Simulations are performed by NDLib~\cite{rossetti2018ndlib}, an open-source toolkit for diffusion dynamics. The baselines are realized by Cosasi~\cite{mccabe2022cosasi}, a Python package for graph diffusion source localization. The Bayesian optimization paradigm is implemented by BOTorch~\cite{balandat2020botorch} and gPyTorch~\cite{gardner2018gpytorch}.
\vspace{-2mm}
\subsection{Configurations}
We adopt SI, SIR, SIS, and IC as diffusion models. The infection rate is set to be $0.1$ in the epidemic models, and the recovery rate in SIR and SIS is set to be $0.1$. Each candidate source set is evaluated by an average of $100$ simulation rounds. The Bayesian optimization paradigm includes $50$ iterations to train the final model.
\textbf{Datasets:}
Three real-world datasets, namely Cora, CiteSeer, and PubMed~\cite{yang2016revisiting}, reproduce the complex social network structure. Since the source localization problem is traditionally studied on connected graphs, we take the largest connected component of these graphs as the studied network. Two synthetic graphs are generated using NetworkX to represent pseudo social networks. They include \textit{connected Watts-Strogatz small-world graphs} (SW)~\cite{watts1998collective} and {\textit{Erdős–Rényi random graphs}} (ER)~\cite{gilbert1959random}. Each synthetic graph has $1,000$ nodes for effectiveness evaluation, and the average degree is around $10$. We use SW graphs with sizes ranging from $1,000$ to $5,000$ for runtime analysis.
\textbf{Baselines:}
\algo~is compared to three popular baselines. (1) Jordan centrality (JC)~\cite{shah2010detecting} greedily selects the nodes with the smallest maximum distances to other nodes in the infected sub-graph.
(2) NetSleuth (Net)~\cite{prakash2012spotting} is a highly efficient algorithm to identify the number and the location of sources under the SI model.
(3) LISN~\cite{nie2019localizing} scores nodes based on shortest distance and maximum likelihood and selects nodes with the highest scores.
\textbf{Metrics:}
The result is evaluated by its distance from the true sources. In our experiment, the identified and true source sets are the same size. Thus, the distance between the two sets is calculated by
$\mathcal{D}\{a, b\} = \min \{\Sigma_i\Delta{(a_i, \hat{b}_i)}\}$,
where $\Delta{(a_i, \hat{b}_i)}$ represents the shortest distance between the nodes $a_i$ and $b_i$, and $\hat{b}$ stands for a permutation of list $b$.
\vspace{-2mm}

\subsection{Results}
The empirical study includes 
(1) Performance: the effectiveness of~\algo~is compared against the baselines; 
(2) Runtime Analysis: the time cost of~\algo~and the baselines are demonstrated to verify the time complexity analysis; and 
(3) Ablation Test: we compare~\algo~with two variants to evaluate the utility of our proposed GSG and GSS.
\vspace{-2mm}
\begin{figure}[b]
    \centering
    \vspace*{-4mm}
    \includegraphics[width=0.96\columnwidth]{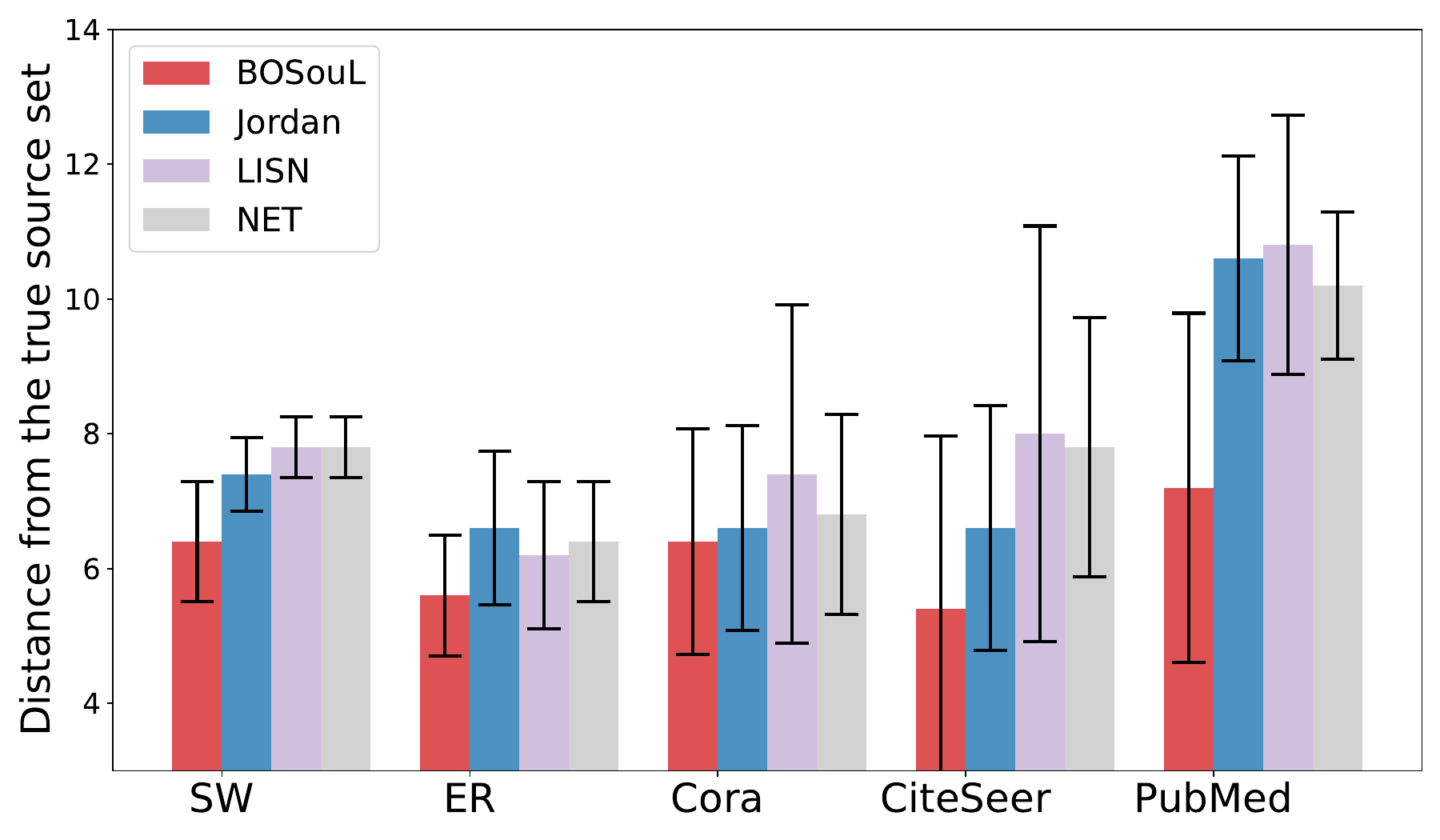}
    \vspace{-4mm}
    \caption{The distance toward the true source set with SIR.}
    \label{fig:SIR}
\end{figure}
\begin{figure}
    \centering
    \includegraphics[width=0.96\columnwidth]{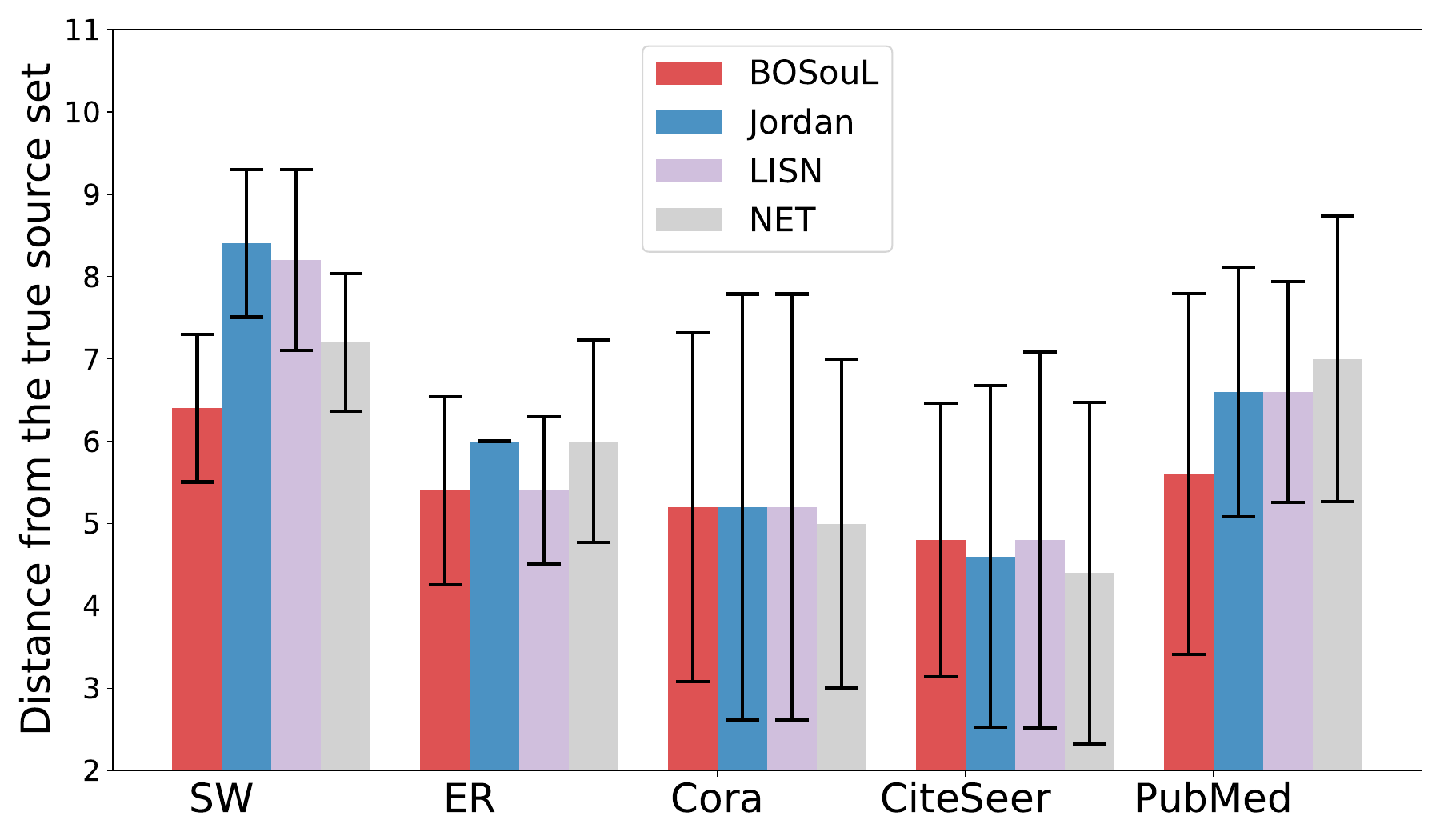}
    \vspace{-4mm}
    \caption{The distance toward the true source set with SI.}
    \label{fig:SI}
    \vspace*{-6mm}
\end{figure}
\subsubsection{Performance.}
To demonstrate the compatibility of~\algo, we test it along with the baselines with two significantly different diffusion models, SIR and SI. The infection rate for both models is $0.1$, and the recovery rate for SIR is $0.1$. For~\algo, the budget of simulation is $70$. The number of candidate nodes is $50$, thus there are $\binom{50}{3} = 19,600$ candidate sets. They are clustered into $20$ groups. Each method runs $10$ times on each graph with different true source sets of size $3$. The mean and standard deviation are reported as the final results.
In SIR models where the source nodes might already recover and do not distinguish from the nodes that have never been infected, identifying the source nodes is much more challenging. Figure~\ref{fig:SIR} clearly illustrates the superiority of~\algo~when solving multi-source localization problems with the SIR model. It achieves the lowest localization error on all five datasets, demonstrating its effectiveness and diffusion model adaptability. Compared to other methods, the performance advance is most significant on SW, CiteSeer, and PubMed, with an enhancement of up to 29\%.
Understandably, the baseline methods do not perform well enough with the SIR model since they explicitly or implicitly assume the SI model. Thus, they only select nodes in the infected sub-graph. Figure \ref{fig:SI} demonstrates that ~\algo~also achieves competitive performance on all five graphs with the SI model. It outperforms the other four methods on four datasets. On CiteSeer, NetSleuth is the best performer but only leads~\algo~by 0.4. 
Additional experiments show that~\algo~surpasses all baselines across all five graphs under the IC model and outperforms baselines on four datasets except for Cora with the SIS model.
\vspace{-2mm}


\subsubsection{Runtime Analysis.}

\begin{figure}[b]
    \centering
    \vspace{-6mm}
    \includegraphics[width=\columnwidth]{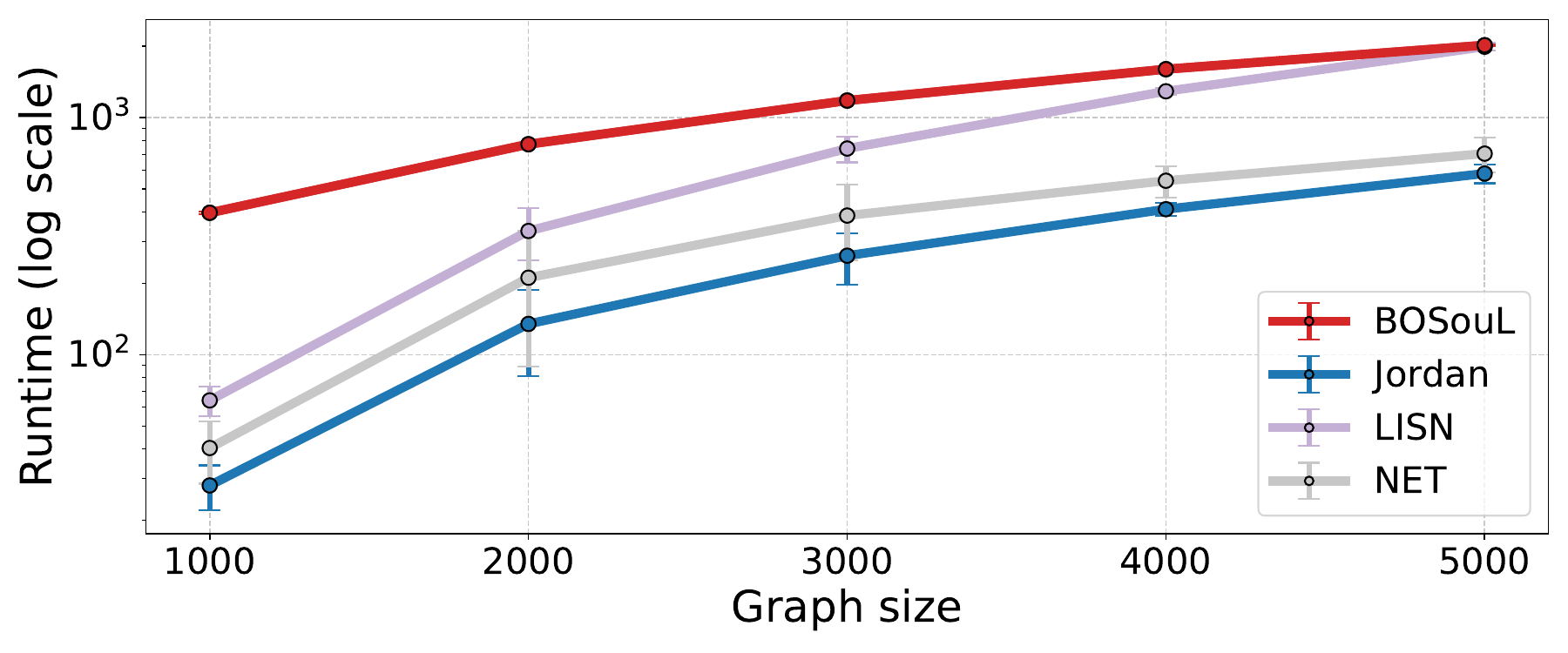}
    \vspace{-6mm}
    \caption{Runtime analysis on SW graph.}
    \label{fig:time}
\end{figure}
As expected,~\algo~has longer running times than the baseline methods on SW graphs with increasing sizes due to the time spent on simulations. This difference is most significant when graph size $N = 1,000$. ~\algo~takes $396.69$ seconds, about six times slower than the slowest baseline LISN. Comparatively, Jordan centrality only takes $28.10$ seconds, and NetSleuth needs $40.42$ seconds. Those are $7.08\%$ and $10.19\%$ of~\algo's running time, respectively. But this difference shrinks as the graph size grows.~\algo~almost scales linearly in the empirical experiment, evidenced by the steady increase in mean runtime. Jordan centrality spends the most time on eigendecomposition, which is part of~\algo. Thus, it shows a similar pattern with a slightly higher rate of increase. NetSleuth, despite its lower time complexity, consistently takes more time than Jordan centrality as the graph size grows from $1,000$ to $5,000$. Also, it has a relatively large variance due to the term representing the size of the infected subgraph in its time complexity. Lastly, LISN's running time grows fastest as the problem size scales because it involves several matrix multiplications. On a size-$5,000$ connected small-world graph,~\algo~takes $2,018.98$ seconds, while LISN takes $1,989.67$ seconds. We can expect that on a larger graph, the running time of the latter will surpass that of the former. The running times for Jordan centrality and NetSleuth are $580.86$ seconds and $704.72$ seconds, respectively. The percentages compared to~\algo~raise to $28.77\%$ and $34.90\%$. This trend is illustrated in Figure~\ref{fig:time}.
Overall,~\algo~shows a stable scalability. The adaptation of Bayesian optimization makes the simulation-based approach tractable. Although its running time is longer than the faster baseline methods like Jordan centrality and NetSleuth, it remains competitive due to its superior performance. At the same time, the efficiency gap shrinks as the problem size grows.

\begin{table}[t]\small

\begin{tabular}{l|l|l|l|l|l}
\hline
   & SW                        & ER                      & Cora                    & CiteSeer                & PubMed                  \\ \hline
raw  & 7.4$\pm$0.6            & 6.0$\pm$1.2         & 8.6$\pm$2.3          & 7.6$\pm$2.5          & 7.6$\pm$2.3          \\ \hline
(GSG)\label{col:gsg}&  -10.8\% & -3.3\% & -23.2\% & -18.4\%  & 0\%        \\ \hline
half  & 6.6$\pm$1.5            & 5.8$\pm$0.5          & 6.6$\pm$1.1          & 6.2$\pm$1.8          & 7.6$\pm$2.2          \\ \hline
(GSS)\label{col:gss}& -2.7\%   & -3.3\% & -2.4\% & -10.5\% & -5.2\%\\ \hline
full & 6.4$\pm$0.9   & 5.6$\pm$0.9 & 6.4$\pm$1.7 & 5.4$\pm$2.7 & 7.2$\pm$2.6 \\ \hline
\end{tabular}%
\vspace{-2mm}
\caption{Ablation tests with SIR model.}
\label{tab:ablation}
\vspace*{-6mm}
\end{table}

\subsubsection{Ablation Study.}
Table~\ref{tab:ablation} compares the performance of our proposed~\algo~method (full) against two ablated versions: using random sampling (RS) instead of GSS for data acquisition and using an RBF kernel instead of the proposed GSG kernel. Column (GSG) shows the percentage decrease in localization error after substituting the RBF kernel with the GSG kernel. And Column (GSS) demonstrates the further performance increase brought by GSS.
We can observe that RS+GSG (half) always performs at least as well as RS+RBF (raw). More specifically, except for PubMed, GSG brings performance enhancement ranging from $0.2$ to $2.0$, which is a $3.3\%-23.2\%$ decrease in the localization error.
This shows the benefits of the graph spectral Gaussian kernel for effective adaptation to the graph-structured data and the source localization problem.
It is also demonstrated that GSS+GSG (full) outperforms RS+GSG on all five datasets.
This shows the benefits of graph stratified sampling for uniform data acquisition. It explores the search space better than random sampling.
In sum, our ablation study verifies the proposed components each provides significant gains over variants without those techniques. GSG kernel consistently assists in graph-structured data adaptation, fulfilling the smoothness assumption. Graph stratified sampling is crucial for handling more complex search spaces.
\vspace{-3mm}

\section{Conclusion}
This study presents a simulation-based method~\algo~for multi-source localization from a one-shot observation. Bayesian optimization is adopted to foster efficiency and reveal a relationship between the node set and the observation. We theoretically prove that GSG, a graph-level kernel for the Gaussian process, is a valid Mercer kernel, and GSS, a stratified sampling method based on graph clustering, reduces variance better than random sampling.

\newpage

\section*{Acknowledgements} 
This work is supported by NSF IIS award \#2153369.

\bibliography{aaai24}

\end{document}